\documentclass[twoside]{article}

\usepackage[arxiv]{aistats2021}
%
%




\usepackage[utf8]{inputenc} 
\usepackage[T1]{fontenc}    
\usepackage{url}            
\usepackage{booktabs}       
\usepackage{amsfonts}       
\usepackage{nicefrac}       
\usepackage{microtype}      

\usepackage{silence}
\WarningFilter{remreset}{The remreset package}
\usepackage{amsmath,amsfonts,amssymb,amsthm,thmtools,dsfont}
\usepackage{mathtools}
\usepackage{xparse}
\usepackage{enumitem} 
\usepackage{etoolbox}
\usepackage{amsfonts}
\usepackage{hhline}
\usepackage{multirow}
\usepackage{algorithm}
\usepackage{tabularx}
\usepackage[all]{foreign}   

\usepackage[colorlinks=true,bookmarks=false,allcolors=blue,breaklinks=true]{hyperref}
\usepackage[noend]{algpseudocode}
\usepackage[nameinlink,noabbrev]{cleveref}

\usepackage[backend=biber,
            sorting=nyt,
            style=authoryear,
            uniquename=true
           ]{biblatex}

\addbibresource{refs.bib}

\DeclareCiteCommand{\cite}[\mkbibparens]
  {\usebibmacro{prenote}}
  {\usebibmacro{citeindex}%
    \printtext[bibhyperref]{\usebibmacro{cite}}}
  {\multicitedelim}
  {\usebibmacro{postnote}}

\DeclareCiteCommand*{\cite}[\mkbibparens]
  {\usebibmacro{prenote}}
  {\usebibmacro{citeindex}%
    \printtext[bibhyperref]{\usebibmacro{citeyear}}}
  {\multicitedelim}
  {\usebibmacro{postnote}}

\declaretheoremstyle[bodyfont=\itshape,notefont=\bfseries]{thmbf}
\declaretheoremstyle[notefont=\bfseries]{defibf}
\declaretheoremstyle[headfont=\normalfont\itshape,qed=$\square$]{proofita}

\declaretheorem[style=thmbf,numberwithin=section,name={Theorem}]{theorem}
\declaretheorem[style=thmbf,numberlike=theorem,name={Proposition}]{proposition}

\declaretheorem[style=thmbf,numbered=no,name={Corollary}]{corollary*}

\declaretheorem[style=defibf,numberlike=theorem,name={Definition}]{definition}

\declaretheorem[style=defibf,numbered=no,name={Remark}]{remark}

\newcommand*\rnn{\textsc{rnn}}
\newcommand*\rnns{\textsc{rnn}s}

\newcommand*\lstm{\textsc{lstm}}

\newcommand*\rgd{\textsc{rgd}}
\newcommand*\bfgs{\textsc{bfgs}}
\newcommand*\sgd{\textsc{sgd}}
\newcommand*\gd{\textsc{gd}}

\newcommand*\adam{\textsc{adam}}
\newcommand*\adagrad{\textsc{adagrad}}
\newcommand*\amsgrad{\textsc{amsgrad}}
\newcommand*\rmsprop{\textsc{rmsprop}}
\newcommand*\scornn{\textsc{scornn}}
\newcommand*\scurnn{\textsc{scurnn}}

\newcommand*\exprnn{\textsc{exprnn}}
\newcommand*\dtriv{\textsc{dtriv}}
\newcommand*\atriv{\textsc{atriv}}
\newcommand*\mnist{\textsc{mnist}}
\newcommand*\pmnist{\textsc{p-mnist}}
\newcommand*\timit{\textsc{timit}}
\newcommand*\mse{\textsc{mse}}

\newcommand*\lu{\textsc{lu}}

\newcommand*\gpu{\textsc{gpu}}
\newcommand*\gpus{\textsc{gpu}s}

\DeclarePairedDelimiter\pa{\lparen}{\rparen}    
\DeclarePairedDelimiter\norm{\lVert}{\rVert}    
\DeclarePairedDelimiter\ceil{\lceil}{\rceil}    

\NewDocumentCommand{\infnorm}{ s O{} m }{%
  \IfBooleanTF{#1}{\norm*{#3}}{\norm[#2]{#3}}_{\infty}%
}
\NewDocumentCommand{\twonorm}{ s O{} m }{%
  \IfBooleanTF{#1}{\norm*{#3}}{\norm[#2]{#3}}_2%
}
\NewDocumentCommand{\tvnorm}{ s O{} m }{%
  \IfBooleanTF{#1}{\norm*{#3}}{\norm[#2]{#3}}_{\textup{TV}}%
}
\NewDocumentCommand{\onenorm}{ s O{} m }{%
  \IfBooleanTF{#1}{\norm*{#3}}{\norm[#2]{#3}}_1%
}
\NewDocumentCommand{\frobnorm}{ s O{} m }{%
  \IfBooleanTF{#1}{\norm*{#3}}{\norm[#2]{#3}}_F%
}
\NewDocumentCommand{\scalar}{s O{} >{\SplitArgument{1}{,}}m}{%
    \IfBooleanTF{#1}{\scalaraux*#3}{\scalaraux[#2]#3}%
}
\DeclarePairedDelimiterX{\scalaraux}[2]{\langle}{\rangle}{#1, #2}

\newcommand*\circledaux[1]{\tikz[baseline=(char.base)]{
    \node[shape=circle,draw,inner sep=0.8pt] (char) {#1};}}

\NewDocumentCommand{\circled}{ m o }{%
    \IfNoValueTF{#2}{ \circledaux{#1} }{ \stackrel{\circledaux{#1}}{#2} }%
}

\DeclareMathOperator*{\cay}{cay}                        
\newcommand*{\Hol}{\mathrm{Hol}}                        
\newcommand*\dif{\mathrm{d}}                            
\newcommand*\Id{\mathrm{Id}}                            
\newcommand*\grad{\nabla}                               
\newcommand\conn{\nabla}                                
\newcommand*\tensor{\otimes}                            
\newcommand*\defi{\coloneqq}                            
\newcommand*\iso{\cong}                                 
\newcommand{\code}{\texttt}                             
\newcommand*\gm{\textsl{g}}                             
\newcommand*\RR{\mathbb{R}}                              

\newcommand*\PP{\mathbb{P}}                                
\newcommand*\PPb{\widehat{\mathbb{P}}}                    

\newcommand\transaux{\intercal}                             
\newcommand\trans[1]{#1^\transaux}                          
\DeclareMathOperator\Hess{Hess}                             
\DeclareMathOperator\diam{diam}                             
\newcommand\dual[1]{#1^\ast}                                
\newcommand\MM{\mathcal{M}}                                 
\newcommand\glie{\mathfrak{g}}                              
\newcommand\I{\mathrm{I}}                                   
\newcommand\HH{\mathcal{H}}                                 

\newcommand{\lfrac}[2]{\mbox{\small$\displaystyle\frac{#1}{#2}$}}

\let\phi\varphi
\let\epsilon\varepsilon
\let\subset\subseteq

\newcommand*\deffun[1]{\dodeffunction#1\relax}
\def\dodeffunction#1:#2->#3;#4\relax
    {\ifblank{#4}
    {#1\colon#2\to#3}
    {\!\begin{aligned}#1\colon#2&\to#3\\ \dodeffunctionaux#4\relax\end{aligned}}}
\def\dodeffunctionaux#1->#2\relax{#1&\mapsto#2}

\catcode`\|=\active
\DeclarePairedDelimiterX\set[1]{\lbrace}{\rbrace}
  {\mathcode`\|="8000 \def|{\:\delimsize\vert\:}#1}
\catcode`\|=12 %

\NewDocumentCommand{\enorm}{ s O{} m }{%
    \IfBooleanTF{#1}{\norm*{#3}}{\norm[#2]{#3}}_{\E}%
}
\NewDocumentCommand{\denorm}{ s O{} m }{%
    \dual{\IfBooleanTF{#1}{\enorm*{#3}}{\enorm[#2]{#3}}}%
}

\DeclareMathOperator{\Uaux}{U}                           
\DeclareMathOperator{\SLaux}{SL}                         
\DeclareMathOperator{\GLaux}{GL}                         
\DeclareMathOperator{\SOaux}{SO}                         
\DeclareMathOperator{\SUaux}{SU}                         
\NewDocumentCommand{\U}{ m }{ \Uaux\pa{#1} }
\NewDocumentCommand{\SL}{ m }{ \SLaux\pa{#1} }
\NewDocumentCommand{\GL}{ m }{ \GLaux\pa{#1} }

\NewDocumentCommand{\SO}{ m }{ \SOaux\pa{#1} }
\NewDocumentCommand{\SU}{ m }{ \SUaux\pa{#1} }

\NewDocumentCommand{\commasaux}{ m m }{%
    \IfNoValueTF{#2}{ #1 }{ #1, #2 }%
}
\NewDocumentCommand{\prodaux}{ m m }{%
    \IfNoValueTF{#2}{ #1 \times #1 }{ #1 \times #2 }%
}

\makeatletter
\renewcommand\paragraph{\@startsection{paragraph}{4}{\z@}%
                                    {0ex \@plus0.5ex \@minus.2ex}%
                                    {-1em}%
                                    {\normalfont\normalsize\bfseries}}
\makeatother

\usepackage{todonotes}


\begin{document}

%

\twocolumn[

\aistatstitle{Adaptive and Momentum Methods on Manifolds\\Through Trivializations}

\aistatsauthor{Mario Lezcano Casado}

\aistatsaddress{\url{mario.lezcanocasado@maths.ox.ac.uk}\\Mathematical Institute. University of Oxford} ]

\begin{abstract}
Adaptive methods do not have a direct generalization to manifolds as the adaptive term is not invariant.
Momentum methods on manifolds suffer from efficiency problems stemming from the curvature of the manifold.
We introduce a framework to generalize adaptive and momentum methods to arbitrary manifolds by noting that for every differentiable manifold, there exists a radially convex open set that covers almost all the manifold.
Being radially convex, this set is diffeomorphic to $\RR^n$.
This gives a natural generalization of any adaptive and momentum-based algorithm to a set that covers almost all the manifold in an arbitrary manifolds.
We also show how to extend these methods to the context of gradient descent methods with a retraction.
For its implementation, we bring an approximation to the exponential of matrices that needs just of $5$ matrix multiplications, making it particularly efficient on \gpus.
In practice, we see that this family of algorithms closes the numerical gap created by an incorrect use of momentum and adaptive methods on manifolds.
At the same time, we see that the most efficient algorithm of this family is given by simply pulling back the problem to the tangent space at the initial point via the exponential map.

\end{abstract}

\section{Introduction}
Since the introduction of momentum methods by Nesterov~\cite{nesterov1983method}, and adaptive methods with \adagrad~\cite{duchi2011adaptive}, the deep learning field has seen an eruption of methods tailored for non-convex stochastic optimization.
Some of these methods, like \adagrad{} or \amsgrad~\cite{reddi2018convergence}, offer theoretical guarantees under standard assumptions.
Others, like \rmsprop~\cite{hinton2012lecture} and \adam~\cite{kingma2015adam}, are combinations of the former coupled with heuristics to improve their performance.

At the same time, in the last years, optimization on manifolds has become ever more present in the field of machine learning, in order to perform constrained optimization. Manifolds are the natural object to treat symmetric positive definite matrices in metric learning~\cite{zadeh2016geometric} and Bayesian statistics~\cite{rasmussen2005gaussian}, orthogonal matrices when performing matrix factorization and \rnn{} regularization~\cite{arjovsky2016unitary}, low-rank matrices for low-rank models~\cite{sanyal2020stable}, or invertible matrices in normalizing flows~\cite{papamakarios2019normalizing}.

There has been some work trying to reconcile both approaches. One of them came by considering a surjective map from a Euclidean space onto the manifold $\MM$, allowing to pullback the problem from $\MM$ to $\RR^n$~\cite{lezcano2019trivializations}. A different line of work has generalized these methods to manifolds with a product structure~\cite{becigneul2019riemannian}. However, it is still not clear how to couple momentum and adaptive methods with Riemannian gradient descent on a general manifold.

\paragraph{The problem}
As pointed out in~\cite[][Section $7$]{lezcano2019trivializations}, there is a performance gap between gradient descent-like algorithms (\dtriv$1$) and algorithms based in pulling back the problem to a fixed tangent space (\dtriv$\infty$). This had also been noticed in the literature when using adaptations of \bfgs{} on manifolds. We aim to give an explanation for these problems and offer a gradient descent-like algorithm whose performance matches that of \dtriv$\infty$ without using extra memory.

\paragraph{Summary of the contributions and main idea}
We pinpoint two problems that the adaptive and momentum algorithms suffer when considered on a Riemannian manifold
\begin{enumerate}[itemsep=0mm]
    \item \textbf{Non-trivial tangent bundle.} The adaptive term is not invariant under change of coordinates
    \item \textbf{Non-trivial holonomy.} The momentum term depends on the optimization path
\end{enumerate}

As it will be shown, the first problem completely prevents generalizing adaptive methods to general manifolds. The second does not, but incurs a penalty in practice.
We show that these problems are intrinsic to the topology of the manifold, so they cannot be fixed just by choosing the metric. To circumvent them, we take a more modest approach: We will work on almost all the manifold.

We use a deep result in Riemannian geometry that allows us to work on the manifold as we do on $\RR^n$. This holds on any connected complete Riemannian manifold and as such, it may be implemented for any connected differentiable manifold~\cite{nomizu1961existence}.

The second idea starts by computing all the gradients at a fixed tangent space, as done in~\cite{lezcano2019trivializations}. We then construct a family of linear maps that take steepest descent directions to steepest descent directions to move the gradient with the adaptive and momentum terms incorporated to the tangent space of the current iterate. This will give us a way to move the momentum and adaptive term without incurring in the error coming from holonomy. Geometrically, we are taking advantage of the dense trivial contractible sub-bundle of the tangent bundle given by the previous theorem.

Putting these two ideas together, we are able to generalize adaptive and momentum algorithms like \adam, \adagrad, or \rmsprop{} to manifolds in a way that the practical performance of these algorithm matches that of the previous best algorithms considered in the literature.
We also give a fast implementation of the exponential of matrices in \gpus{} that might be of independent interest.

\paragraph{Limitations}
Even though the resulting algorithm will converge to similar test accuracies, the algorithms proposed make use of the logarithm of matrices to achieve so. As such, even though the proposed algorithms solve the mentioned performance gap, at the moment, regular \dtriv$\infty$---pulling back the problem to a fixed tangent space---is noticeably faster. This may change in the future when faster \gpu{} approximations to the logarithm of matrices become available.

\section{Related Work}\label{sec:related}
\paragraph{Riemannian gradient descent (\rgd).}
This algorithm was first introduced in the Riemannian setting in~\cite{luenberger1972gradient}. The algorithm follows the steepest descent direction along a geodesic, as its Euclidean counterpart. Following its inception, most algorithms proposed for convex, non-convex, stochastic and non-stochastic, and constrained optimization have found analogues in the Riemannian setting~\cite{edelman1998geometry,smith1993geometric,absil2009optimization}. The field is mostly divided in algorithms for geodesically convex optimization~\cite{udriste1994convex,zhang2016first,zhang2016riemannian}, and those based on general retractions for non-convex functions~\cite{absil2009optimization,bonnabel2013stochastic,boumal2019global}.

\paragraph{Trivializations.}
A second approach to optimization on manifolds comes from considering a surjective map from a Euclidean space onto the manifold and using it to pullback the problem to a flat space. This approach has been used to perform first order optimization~\cite{helfrich2018orthogonal,lezcano2019cheap}, gradient-free optimization~\cite{dreisigmeyer2018direct}, and optimization of complex networks~\cite{papamakarios2019normalizing}. A general framework for this approach, together with a connection to Riemannian gradient descent was proposed in~\cite{lezcano2019trivializations}.

\paragraph{Adaptive and momentum methods on manifolds.}
Even though most algorithms on Euclidean spaces have found a counterpart in the Riemannian setting, many times via some bounds on the differential and Hessian of the retraction, or through estimates given by a theorem of Rauch, accelerated momentum methods have remained particularly elusive. The proposed methods transport past gradients along geodesics and try to choose an estimating sequence for the step-sizes. We will elaborate on the main idea behind these methods in~\Cref{sec:problem}.

Current theoretical results on acceleration via momentum methods on manifolds either require starting very close to the optimum~\cite{zhang2018estimate}, or require the solution of a complex differential equation in every step~\cite{liu2017accelerated}, or do not guarantee convergence to the minimum~\cite{alimisis2020practical}. We find a similar situation when we go to higher order. For example, there are many generalizations of \bfgs{} to manifolds by parallel transporting different vectors~\cite{huang2015riemannian,huang2018riemannian}, but their performance in practice does not match that of their Euclidean counterparts.
Adaptive methods have recently been generalized to product manifolds~\cite{becigneul2019riemannian} and (with local convergence) to embedded matrix manifolds~\cite{kasai2019adaptive}, but these methods do not extend to general manifolds. Purely heuristic generalizations have also been proposed, where the adaptive term is parallel-transported~\cite{roy2018geometry,cho2017riemannian}.

\section{The Issues of Momentum and Adaptive Methods on Manifolds}\label{sec:problem}
Consider throughout the rest of the paper a Riemannian manifold $(\MM, \gm)$ which is connected and complete.
\subsection{Momentum Methods}
Momentum methods are a generalization of Nesterov's accelerated gradient descent~\cite{nesterov1983method}. The momentum approach was first proposed in~\cite{rumelhart1986learning}, and consists of using a linear combination of previous gradients and the current gradient as the descent direction. To be able to generalize these methods to manifolds, we need to be able to combine gradients at different points. In order to do so, a Riemannian manifold gives us a natural way of extending a vector to a vector field.
\begin{definition}
    The \textbf{adapted vector field} $v^\ast(x)\in T_x\MM$ of a vector $v \in T_p\MM$ is defined in a normal neighborhood $x \in U_p$ by parallel transporting $v$ along the unique geodesic between $x$ and $p$.
\end{definition}
Local existence of a normal neighborhood follows from the inverse function theorem. As we will see later, these neighborhoods turn out to cover almost all the manifold (\cf, \Cref{thm:properties_exp}).

Using adapted vector fields, under suitable assumptions on the iterates falling in a normal neighborhood, there are two direct ways to generalize momentum methods to manifolds which have been extensively considered in the literature.

\paragraph{Transport the momentum.}
Let $g_k \defi \grad f(x_k) \in T_{x_k}\MM$ be the gradient at the current iterate. We may define the momentum term as $m_k = g_k + \alpha m^\ast_{k-1}(x_k) \in T_{x_k}\MM$ for a mixing constant $\alpha > 0$ and then use $-\eta m_k$ as the descent direction, for a learning rate $\eta > 0$. In other words, we transport the momentum from the previous tangent space to the next one, and we update it there. This method, although direct and cheap to implement, is just an approximation to the following variation.

\paragraph{Choose the best approximation for the gradients.}
This approach defines the momentum as $m_k = \sum_{t=1}^k \alpha^{k-t}g^\ast_t(x_k)$. In plain words, we parallel transport all the previous gradients along the shortest geodesic to the current iterate. This can geometrically be interpreted as considering the best approximation at $x_k$ of each of the previous gradients according to the metric that we are considering. Of course, this method has the drawback of not being practical, as one has to store all the gradients from previous steps.

In summary, the second approach gives a faithful approximation of momentum methods with respect to the metric on the manifold at the expense of having to save all the previous iterations. This not practical, so in practice mostly the first approximation has been studied.

The relation between these two methods comes from the holonomy of the parallel transport system.
Holonomy can be thought of as the integral of the curvature, as formalized by the Ambrose-Singer theorem~\cite{ambrose1953theorem}, and it measures how much a vector rotates when parallel-transported around a loop.
As such, the difference between the momentum from these two algorithms is dependent on the optimization path through the holonomy of the parallel transport system.
In $\RR^n$, these two algorithms coincide, and thus, one can get the benefits of not having to store all the previous gradients at no extra price, but that is not the case on a curved manifold.
On a general manifold, the first algorithm is just an approximation of the second one.
Furthermore, this is a topological problem of the manifold, so it cannot be fixed by just changing the metric. We move the exploration of this idea to~\Cref{sec:problems_further}, as it requires more advanced geometric tools.
The practical implications of this approximation were first noted in~\cite{becigneul2019riemannian}, where report that the holonomy breaks the sparsity of the gradients. In practice, as we mentioned in the introduction, one can see the efficiency problems caused by this approximation both in~\cite{lezcano2019trivializations}
and in the experiments section~\Cref{sec:experiments}, where we compare the algorithms in~\cite{lezcano2019trivializations}, which account for an approximation transporting the momentum with the corrected ones that we propose later in this paper. 

\subsection{Adaptive Methods}
Adaptive methods were introduced
by Duchi, Hazan and Singer with
\adagrad~\cite{duchi2011adaptive}. They add a regularization to the size of the coordinates of the gradient coming from optimizing a certain upper bound on the regret of an online algorithm. It accounts to forming a matrix $H_k \defi \sum_{t=1}^k g_t\trans{g_t}$ and regularizing the gradient as $\hat{g}_k = H_k^{-1/2}g_k$, to use $-\eta\hat{g}_t$ as the descent direction. In the original paper, the authors show that the matrix $H_t$ can be approximated by its diagonal, being able then to effectively store it and compute its inverse, as the product $H_t^{-1/2}g_t$ becomes a point-wise division by the square root of the diagonal terms of $H_t$.

The issue with generalizing adaptive methods to manifolds does not come from an efficiency point of view as it happened with momentum methods, but from an intrinsic topological problem: The adaptive term cannot be defined on a manifold without storing $\mathcal{O}(n^2)$ parameters. The product $g_k \trans{g_k}$ is invariantly described on a manifold as the $(1,1)$-tensor given by the tensor product $g_k \tensor \pa{\dif f}_{x_k}$. The problem comes when considering the diagonal of this tensor. Taking the diagonal is not invariant under changes of coordinates, and as such, the adaptive term cannot be defined globally on a manifold without global coordinates---it may be defined only on $\RR^n$.

\section{Adaptive Methods Through Trivializations}
We will introduce the general method that solves the two problems described in~\Cref{sec:problem} in two steps.
We will first show how one can completely avoid these two problems by using \emph{static trivializations}.
Then, we show how these ideas can be generalized to the setting of \emph{dynamic trivializations}. This setting has both static trivializations and \rgd{} as particular cases.

\subsection{Static trivializations}\label{sec:static_triv}
    We start by recalling the static trivialization framework~\cite{lezcano2019trivializations}. We state it for case of the Riemannian exponential map to make the ideas clearer, although it can be performed with more general maps. For a fixed $p \in \MM$, static trivializations pullback the problem from $\MM$ to $T_p\MM$, going from a constrained problem on $\MM$ to an unconstrained problem on $T_pM (\iso \RR^n)$
\[
    \min_{x \in \MM} f(x) \qquad \xrightarrow{\text{trivialization}}\qquad \min_{v \in T_pM} f(\exp_p(v)).
\]

As proved in~\cite{lezcano2019trivializations}, the two problems are equivalent to a first order in the region in which $\exp_p$ is a local diffeomorphism. In this region, static trivializations account for a change of metric in $\MM$ and solving the second problem via gradient descent is equivalent to solving the first one using \rgd{} with respect to a different metric.

    On manifolds with non-positive sectional curvature, by the Cartan-Hadamard theorem, $\exp_p$ is a local diffeomorphism around any point in $T_p\MM$, so $\exp_p$ acts globally as a change of metric. In the general case, we cannot get the result on all the manifold, but we can still get it on almost all of it.
    \begin{theorem}\label{thm:properties_exp}
        Let $(\MM, \gm)$ be a complete and connected Riemannian manifold. Let $\overline{U}_p$ be the (closed) set of vectors $v \in T_p\MM$ for which $\gamma(t) = \exp_p(tv), t \in [0,1]$ is length minimizing, and let $U_p$ be its interior and $C_p$ its boundary. Then, $\exp_p$ is a diffeomorphism on $U_p$ and $\exp_p(C_p)$ has measure zero with respect to the Borel measure induced by $\gm$.
    \end{theorem}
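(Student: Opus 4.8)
The plan is to recognise $\overline U_p$ as the closed \emph{segment domain} of $\exp_p$ and to run the classical cut-locus machinery on it. Since $(\MM,\gm)$ is complete and connected, Hopf--Rinow guarantees that $\exp_p$ is defined on all of $T_p\MM$, that it is onto $\MM$, and that any two points are joined by a minimizing geodesic. Writing $S_p\defi\{v\in T_p\MM:\norm{v}=1\}$ and $\gamma_v(t)\defi\exp_p(tv)$, I would bring in the cut-time function
\[
    \rho\colon S_p\to(0,\infty],\qquad \rho(v)\defi\sup\{\,s>0: \gamma_v|_{[0,s]}\text{ is length minimizing}\,\},
\]
and use the standard fact that $\rho$ is continuous with values in $(0,\infty]$. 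In polar coordinates $T_p\MM\setminus\{0\}\iso S_p\times(0,\infty)$ one then reads off, from the semicontinuity of $\rho$ and from the fact that the set of $s$ with $\gamma_v|_{[0,s]}$ minimizing is the closed interval $[0,\rho(v)]$, that $\overline U_p=\{0\}\cup\{\,tv:v\in S_p,\ 0<t\le\rho(v)\,\}$ is closed, that its interior is exactly $U_p=\{0\}\cup\{\,tv:0<t<\rho(v)\,\}$, and that $C_p=\{\,\rho(v)v:v\in S_p,\ \rho(v)<\infty\,\}$.

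Next I would prove that $\exp_p$ is a diffeomorphism on $U_p$ in two steps. For the \emph{local} statement: given $w=tv\in U_p$ with $0<t<\rho(v)$, choosing $t<t'<\rho(v)$ the geodesic $\gamma_v|_{[0,t']}$ is minimizing, hence has no point conjugate to $p$ in the open parameter range $(0,t')$, since a geodesic fails to be minimizing past its first conjugate point; thus $w=\gamma_v(t)$ is not conjugate to $0$, so $(\dif\exp_p)_w$ is invertible and the inverse function theorem applies (at $w=0$ this is trivial). For \emph{injectivity}: if $\exp_p(w_1)=\exp_p(w_2)=q$ with $w_1,w_2\in U_p$, then the geodesic segment $s\mapsto\exp_p(sw_i)$, $s\in[0,1]$, is minimizing, so $\norm{w_1}=\norm{w_2}=d(p,q)$; if $w_1\ne w_2$ these are two distinct minimizing geodesics from $p$ to $q$, which forces $q$ to be a cut point of $p$ and hence $w_i\in C_p$, contradicting $U_p\cap C_p=\emptyset$. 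An injective local diffeomorphism is a diffeomorphism onto its image, and that image $\exp_p(U_p)$ is open since $U_p$ is open and $\exp_p$ is a local diffeomorphism there.

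For the measure-zero claim I would first show $C_p$ is Lebesgue-null in the vector space $T_p\MM$: in the polar coordinates above, $C_p$ is the graph of the continuous (in particular measurable) function $v\mapsto\rho(v)$ over $\{v\in S_p:\rho(v)<\infty\}$, and the graph of a measurable function is null by Fubini; since the polar-coordinate map is a diffeomorphism off the origin it carries null sets to null sets. Finally $\exp_p\colon T_p\MM\to\MM$ is smooth between two $n$-manifolds, hence locally Lipschitz, hence sends Lebesgue-null sets to null sets — applying this to each compactum $C_p\cap\{\norm{v}\le k\}$ and taking a countable union — so $\exp_p(C_p)$ has measure zero for the Riemannian volume measure, which is the Borel measure induced by $\gm$.

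The hard part will be the geometric input rather than the bookkeeping: the continuity of the cut-time function $\rho$, the fact that the first conjugate point along a geodesic never precedes the cut point, and the characterisation of cut points via conjugacy or branching of minimizers. These are the places where one has to run genuine limiting and compactness arguments; everything else is the inverse function theorem, point-set topology, and Fubini. In the writeup these facts would simply be quoted from a standard Riemannian geometry reference.
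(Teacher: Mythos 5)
Your proposal is correct. The paper does not actually prove this theorem; it cites \cite{petersen2016riemannian} (Section 5.7.3) for the diffeomorphism statement and \cite{itoh1998dimension} for the measure-zero statement. Your argument for the first part is exactly the standard segment-domain/cut-locus machinery contained in the cited Petersen section: continuity of the cut-time $\rho$ identifies $U_p$ and $C_p$ in polar coordinates, absence of conjugate points before the cut time gives the local diffeomorphism via the inverse function theorem, and the branching characterisation of cut points gives injectivity. Where you genuinely diverge from the paper is the second part: Itoh--Tanaka is a much stronger theorem (integrality of the Hausdorff dimension of the cut locus, local Lipschitz regularity of $\rho$), whereas the bare measure-zero claim only needs your elementary chain --- $C_p$ is the graph of a continuous function over a subset of $S_p$, hence Lebesgue-null by Fubini, and the smooth (hence locally Lipschitz) equidimensional map $\exp_p$ carries null sets to null sets after exhausting $C_p$ by compacta. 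This buys a self-contained and considerably more elementary proof of exactly the statement needed, at the cost of not yielding the finer structural information about $\exp_p(C_p)$ that the citation provides. The facts you propose to quote (continuity of $\rho$, cut point no later than first conjugate point, two minimizers force a cut point) are indeed the only nontrivial geometric inputs, and all are standard.
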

    \vspace{-0.3cm}
    \begin{proof}
        See~\cite[Section 5.7.3]{petersen2016riemannian} for the first part. The second follows from~\cite{itoh1998dimension}.
    \end{proof}
    \vspace{-0.3cm}
    This theorem says that $\exp_p$ is a diffeomorphism on almost all the manifold, and as such, $\exp_p$ does not create critical points on a large enough neighborhood of $p$ diffeomorphic to $\RR^n$. Furthermore, as $T_p\MM \iso \RR^n$, it is clear that this approach readily generalizes momentum and adaptive methods to almost all the manifold (or in non-positive curvature to all the manifold) as one may use \adam{} or \adagrad{} to optimize the function $\deffun{f \circ \exp_p : T_p\MM -> \RR;}$.

\subsection{Dynamic trivializations}\label{sec:dyn_triv}
We now recall the idea behind \emph{dynamic trivializations}~\cite{lezcano2019trivializations} as a connection between \rgd{} and static trivializations, and we use this connection to generalize momentum and adaptive methods to \rgd.

Dynamic trivializations arise naturally from the following observation:
In~\Cref{sec:static_triv}, we considered the Riemannian exponential $\exp_p$ at a fixed point $p \in \MM$ as a trivialization, but the choice of the point $p$ is completely arbitrary. Dynamic trivializations repeatedly change the trivialization point $p$. They take $K$ steps in $T_{p_i}\MM$ and after the $K$-th iterate $v_{i,K} \in T_{p_i}\MM$, they change the trivialization point to $p_{i+1} \defi \exp_{p_i}(v_{i,K})$ and set $v_{i+1,0} = 0$. In other words, they set the new trivialization point to the current iterate on the manifold and start optimizing on its tangent space.

Dynamic trivializations generalize both static trivializations and \rgd. For $K = \infty$---\ie, never changing the trivialization---dynamic trivializations are exactly static trivializations at $\exp_{p_0}$. For $K = 1$, dynamic trivializations recover \rgd~\cite{lezcano2019trivializations}.

Adaptive and momentum methods can be implemented through static trivializations. We will now use the fact that \dtriv$K$ interpolates between \rgd{} and static trivializations to generalize adaptive and momentum methods to the setting of \rgd.

As we argued in~\Cref{sec:problem}, the adaptive term is not invariant under changes of coordinates, and as such, it cannot be defined after a change of basis. For this reason, the adaptive term has to always be computed at the same tangent space. This can be done simply as it is done, as in the setting of static trivializations, by computing the gradient of $f \circ \exp_{p_0}$ and computing the momentum and adaptive term at $p_0$ following the rule of \adam{} or \rmsprop{} or any other adaptive optimizer. From this, we get an adapted descent direction $\hat{g}_t \in T_{p_0}\MM$. To transport this vector to $T_{p_i}\MM$, we shall use the map induced by the dynamic trivialization algorithm, as per the following proposition.
    \begin{proposition}\label{prop:vector_field}
        Let $(\MM, \gm)$ be a Riemannian manifold. Let $f$ be any differentiable function $f$ on $\MM$, and let $p_0,p_i \in \MM$ be some points and $v_{i,k} \in T_{p_i}\MM$ with $x_t \defi \exp_{p_i}(v_{i,k})$ such that $p_0, p_i \in U_{x_t}$ with $U_{x_t}$ as in~\Cref{thm:properties_exp}. Then, the map
        \[
            \deffun{\dif\pa{\exp^{-1}_{p_0} \circ \exp_{p_i}}^\ast_{v_{i,k}} : T_{p_0}\MM -> T_{p_i}\MM;}
        \]
        were $(-)^\ast$ is the adjoint, takes directions of steepest descent of $f \circ \exp_{p_0}$ to directions of steepest descent of $f \circ \exp_{p_i}$. Furthermore, it is the only map between $T_{p_0}\MM$ and $T_{p_i}\MM$ with this property.
    \end{proposition}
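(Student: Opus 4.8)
The plan is to identify the map in the statement with the adjoint of the differential of the transition map $\phi\defi\exp_{p_0}^{-1}\circ\exp_{p_i}$ and then reduce the steepest‑descent assertion to the chain rule for gradients. First I would check that $\phi$ is a well‑defined smooth map from a neighborhood of $v_{i,k}$ in $T_{p_i}\MM$ into $T_{p_0}\MM$: $\exp_{p_i}$ is globally smooth by completeness, and the hypothesis $p_0\in U_{x_t}$, together with the symmetry of the cut‑locus relation and \Cref{thm:properties_exp}, puts $x_t=\exp_{p_i}(v_{i,k})$ off the cut locus of $p_0$, so the branch of $\exp_{p_0}^{-1}$ landing in $U_{p_0}$ is a diffeomorphism onto a neighborhood of $w\defi\exp_{p_0}^{-1}(x_t)\in U_{p_0}$. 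On this neighborhood one has $f\circ\exp_{p_i}=(f\circ\exp_{p_0})\circ\phi$, directly from $\exp_{p_0}\circ\exp_{p_0}^{-1}=\Id$ near $x_t$. (The companion hypothesis $p_i\in U_{x_t}$ forces $v_{i,k}\in U_{p_i}$, so that by \Cref{thm:properties_exp} the differential $\dif(\exp_{p_i})_{v_{i,k}}$, hence $\dif\phi_{v_{i,k}}$ and its adjoint, are isomorphisms; this is not needed for the statement itself but makes the map non‑degenerate.)

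Writing $F_0\defi f\circ\exp_{p_0}$, so that $f\circ\exp_{p_i}=F_0\circ\phi$ near $v_{i,k}$, the next step is the one‑line identity
\[
  \grad\pa{F_0\circ\phi}(v_{i,k})=\dif\phi^\ast_{v_{i,k}}\pa{\grad F_0(w)},
\]
obtained from the chain rule $\dif(F_0\circ\phi)_{v_{i,k}}=\dif(F_0)_w\circ\dif\phi_{v_{i,k}}$, the defining relation $\gm_q\pa{\grad h(q),\,\cdot\,}=\dif h_q$ of the Riemannian gradient at $q=w$ and $q=v_{i,k}$, and the definition of the adjoint $\dif\phi^\ast_{v_{i,k}}$. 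Since a direction of steepest descent of a function at a point is, by definition, a positive multiple of the negative of its gradient there, and $\dif\phi^\ast_{v_{i,k}}$ is linear, the identity says precisely that $\dif\phi^\ast_{v_{i,k}}$ carries the directions of steepest descent of $f\circ\exp_{p_0}$ at $w$ onto those of $f\circ\exp_{p_i}$ at $v_{i,k}$.

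For uniqueness the crucial observation is that the requirement is imposed for \emph{every} differentiable $f$, and the vectors $\grad(f\circ\exp_{p_0})(w)$ sweep out all of $T_{p_0}\MM$: given $y\in T_{p_0}\MM$, transport the linear functional $\gm_{p_0}(y,\,\cdot\,)$ through the diffeomorphism $\exp_{p_0}^{-1}$ near $x_t$ and multiply by a smooth bump equal to $1$ near $x_t$ and supported off the cut locus of $p_0$; the resulting $f$ on $\MM$ satisfies $\grad(f\circ\exp_{p_0})(w)=y$. Hence any (linear) map $T_{p_0}\MM\to T_{p_i}\MM$ that, for all $f$, sends $\grad(f\circ\exp_{p_0})(w)$ to $\grad(f\circ\exp_{p_i})(v_{i,k})=\dif\phi^\ast_{v_{i,k}}\grad(f\circ\exp_{p_0})(w)$ must agree with $\dif\phi^\ast_{v_{i,k}}$ on all of $T_{p_0}\MM$.

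I expect the only real care to go into the two bookkeeping facts of the first paragraph --- well‑definedness and (non‑)degeneracy of $\phi$, which is exactly where $p_0,p_i\in U_{x_t}$ and \Cref{thm:properties_exp} enter --- and into pinning down the precise reading of ``unique'': the argument above determines the map completely once a direction of steepest descent is taken to be the vector $-\grad$ itself (the reading implicit in the paper's ``linear maps that take steepest descent directions to steepest descent directions''), and up to a positive scalar among linear maps under the purely scale‑free reading. The computational heart, the chain‑rule/adjoint identity, is routine.
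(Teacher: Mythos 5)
Your proof is correct and follows the same route as the paper: the chain rule applied to $f\circ\exp_{p_i}=(f\circ\exp_{p_0})\circ(\exp_{p_0}^{-1}\circ\exp_{p_i})$, followed by taking adjoints to convert the identity of differentials into the identity of gradients, with \Cref{thm:properties_exp} guaranteeing that the transition map and its differential are well defined and invertible. The one difference is that you also prove the uniqueness clause (by noting that $\grad(f\circ\exp_{p_0})(w_{i,k})$ ranges over all of $T_{p_0}\MM$ as $f$ varies, via a bump-function construction) and flag the $-\grad$ versus positive-multiple reading of ``steepest descent''; the paper's proof stops at the gradient identity and leaves uniqueness unargued, so your version is actually the more complete one.
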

    \begin{proof}
        The hypothesis on $p_0, p_i$ and~\Cref{thm:properties_exp} assure that the operator $\dif\pa{\exp^{-1}_{p_0} \circ \exp_{p_i}}_{v_{i,k}}$ exists and is an isomorphism of linear spaces. By the chain rule, letting $w_{i,k} \defi \exp^{-1}_{p_0}(x_t)$ we have that
        \[
            \dif \pa{f \circ \exp_{p_i}}_{v_{i,k}} = \dif \pa{f \circ \exp_{p_0}}_{w_{i,k}} \circ \dif \pa{\exp_{p_0}^{-1} \circ \exp_{p_i}}_{v_{i,k}}
        \]
        and taking adjoints
        \begin{align*}
            \grad\pa{f &\circ \exp_{p_i}}(v_{i,k}) =\\
            &\dif\pa{\exp_{p_0}^{-1} \circ \exp_{p_i}}^\ast_{v_{i,k}}\pa{\grad \pa{f \circ \exp_{p_0}}\pa{w_{i,k}}}.\qedhere
        \end{align*}
    \end{proof}

    In other words, we compute the gradient together with the momentum and adaptive term at $T_{p_i}\MM$ and then we transport it to our current tangent space using the linear map that preserves directions of steepest descent. When putting together the previous discussion with this proposition we get the algorithm \atriv$K$ as described in~\Cref{alg:adaptive_triv}. Note that for $K = \infty$, $\atriv\infty$ is exactly the same as $\dtriv\infty$, as one would expect.

    \paragraph{Comparison against the classic generalizations of \rgd.}
    A clear geometric picture arises when comparing \atriv$1$ with the methods presented in~\Cref{sec:problem}. The second method in that section can be interpreted as considering the flat metric from $T_{x_t}\MM$ to transport the previous gradients from $T_{x_i}\MM$ to that tangent space. This method has the problem of having to store all the previous gradients. Here, we substitute the moving metric at $T_{x_t}\MM$ by the use of a fixed one at $T_{x_0}\MM$, hence being able to accumulate the gradients. Note also the first algorithm introduced in that section allows implementing momentum method at the expense of suffering from the holonomy problems noted in~\cite{becigneul2019riemannian}. Adaptive trivializations do not suffer these problems since they use a metric with trivial holonomy. Furthermore, \atriv$K$ allows for being coupled with adaptive methods, while the classic generalizations do not.

    \paragraph{A note on general retractions.}
    The case for a general retraction is analogue and we have spared the reader the details for concreteness. For every retraction and every $p \in \MM$, we know that there exists a neighborhood $U_p$ such that $r_p$ is invertible, although in general it will not be as large as the one given by the exponential. For this reason, we may be able to perform this algorithm just locally. If the optimization process leaves that neighborhood we can apply a restart process on the momentum and adaptive term. These restarts are rather common in optimization algorithms with momentum (\cf, \cite{allen2017linear}). Solutions of this flavor had already been considered in the literature~\cite{sato2019riemannian}.

\begin{algorithm}[!htp]
    \caption{Adaptive dynamic trivialization}
    \label{alg:adaptive_triv}
    \begin{algorithmic}[1]
            \Require $K > 0$, $p_0 \in \MM$. An adaptive optimizer (\eg, \adam, \rmsprop, \textellipsis)
            \For {$i = 0, \ldots$}
            \State $v_{i, 0} = 0$
                \For {$k = 0, \dots, K-1$}\Comment{Optimize on $T_{p_i}\MM$}
                \State $w_{i,k} = \pa{\exp^{-1}_{p_0} \circ \exp_{p_i}}(v_{i, k})$
                \State $g_t = \grad \pa{f \circ \exp_{p_0}} \pa{w_{i, k}}$
                \State Compute $\hat{g}_t$ using the optimizer, aggregating the adaptive term on $T_{p_0}\MM$
                \State $\tilde{g}_t = \dif\pa{\exp_{p_0}^{-1} \circ \exp_{p_i}}^\ast_{v_{i,k}}(\hat{g}_t)$
                \State $v_{i, k+1} = v_{i, k} - \eta\tilde{g}_t$
                \EndFor
                \State $p_{i+1} = \exp_{p_i}(v_{i,K})$ \Comment{Update trivialization point}
            \EndFor
    \end{algorithmic}
\end{algorithm}

\paragraph{Convergence guarantees}
In the case when the optimizer is gradient descent, we can give exact guarantees for the algorithm by using first and second order bounds on the Riemannian exponential.

\begin{theorem}
    Let $(\MM, \gm)$ be a connected and complete Riemannian manifold with zeroth and first order curvature bounds $\delta \leq \sec \leq \Delta$ and $\norm{\conn R} \leq \Lambda$. Consider a connected submanifold $\mathcal{X}$ with $\diam(\mathcal{X}) = r$ where $r \leq \frac{\sqrt{2}\pi}{\sqrt{\delta + \Delta}}$ if $\Delta > - \delta$, and let $f$ be a function on $\mathcal{X}$ bounded below by $f^\ast \in \RR$ such that $\norm{\Hess f} \leq \alpha$. Then, there exists a constant $\widehat{\alpha}_r$ that does not depend on the dimension of $\MM$ such that $\norm{\Hess \pa{f \circ \exp_p}\vert_{\exp_p^{-1}\pa{\mathcal{X}}}} \leq \widehat{\alpha}_r$ for any $p \in \mathcal{X}$. Therefore, running \atriv$K$ with \gd{} and $\eta = \frac{1}{\widehat{\alpha}_r}$, if all the iterates stay in $\mathcal{X}$, the algorithm will find a point $v_{i, k} \in T_{p_i}\MM$ such that $\norm{\grad \pa{f \circ \exp_{p_i}}(v_{i,k})} < \epsilon$ in at most
        $\ceil[\Big]{\lfrac{2\widehat{\alpha}_r}{\epsilon^2}\pa{f(p_0) - f^\ast}}$
    steps.
\end{theorem}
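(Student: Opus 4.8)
The plan is to reduce the statement to standard Euclidean gradient descent on the pulled-back function $f \circ \exp_p$, so the bulk of the work is establishing the dimension-free Hessian bound $\widehat{\alpha}_r$. First I would recall that, by \Cref{thm:properties_exp}, on the region where $\exp_p$ is length-minimizing the map $\exp_p$ is a diffeomorphism, so $f \circ \exp_p$ is well-defined and smooth on $\exp_p^{-1}(\mathcal{X})$; the diameter bound $r \leq \sqrt{2}\pi/\sqrt{\delta+\Delta}$ is exactly what guarantees that $\mathcal{X}$ sits inside a normal ball (this is the Klingenberg-type injectivity-radius estimate, controlled by the upper curvature bound and, when $\Delta > -\delta$, by the conjugate radius). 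Inside this ball, $\Hess(f \circ \exp_p) = \exp_p^* (\Hess f) + (\text{terms involving } \dif f \text{ and the second fundamental data of } \exp_p)$; concretely one writes, for $v$ in the domain and $w \in T_v(T_p\MM)$,
\[
    \Hess(f\circ\exp_p)_v(w,w) = \Hess f_{\exp_p(v)}\big(\dif(\exp_p)_v w,\, \dif(\exp_p)_v w\big) + \big\langle \grad f(\exp_p(v)),\, \tfrac{D^2}{\dif t^2}\exp_p(v+tw)\big|_{t=0}\big\rangle,
\]
so it suffices to bound $\norm{\dif(\exp_p)_v}$ from above and the acceleration of the variation field, i.e.\ the Hessian of the map $\exp_p$ itself.

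The key step is therefore a Rauch-type comparison: the differential $\dif(\exp_p)_v$ acts on a Jacobi field, and under $\delta \leq \sec \leq \Delta$ with $\norm{v} \leq r$ one gets a bound of the form $\norm{\dif(\exp_p)_v} \leq c_1(\delta,\Delta,r)$, uniformly in the dimension (the comparison is with the constant-curvature model, whose Jacobi fields are scalar $\sn$ functions). For the second-order term one differentiates the Jacobi equation once more; this is where the curvature-derivative bound $\norm{\conn R} \leq \Lambda$ enters, producing a bound $\norm{\Hess(\exp_p)_v} \leq c_2(\delta,\Delta,\Lambda,r)$ again independent of dimension. Combining, $\widehat{\alpha}_r \defi c_1^2 \alpha + c_2 \cdot \sup_{\mathcal{X}}\norm{\grad f}$; one still needs $\sup_{\mathcal X}\norm{\grad f}$ finite, which follows from $\norm{\Hess f}\leq\alpha$, the diameter bound $r$, and a reference value of $\norm{\grad f}$ at one point — or one folds a bound for it into $\widehat\alpha_r$ by integrating the Hessian along minimizing geodesics inside $\mathcal{X}$. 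I would cite the relevant comparison estimates (e.g.\ the Jacobi field bounds in \cite{petersen2016riemannian} and the higher-order version needed here) rather than re-deriving them.

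The final step is purely Euclidean. Once $\norm{\Hess(f\circ\exp_p)|_{\exp_p^{-1}(\mathcal X)}} \leq \widehat{\alpha}_r$ holds for every trivialization point $p \in \mathcal{X}$, each inner loop of \atriv$K$ with \gd{} is ordinary gradient descent on an $\widehat{\alpha}_r$-smooth function with step $\eta = 1/\widehat{\alpha}_r$, so the standard descent lemma gives $f(x_{t+1}) \leq f(x_t) - \tfrac{1}{2\widehat{\alpha}_r}\norm{\grad(f\circ\exp_{p_i})(v_{i,k})}^2$. Crucially, when the trivialization point is updated we have $p_{i+1} = \exp_{p_i}(v_{i,K}) = x_t$, so the value $f(x_t)$ is continuous across the restart and the telescoping sum runs over all iterates globally; since $f$ is bounded below by $f^\ast$, after $N$ steps $\min_t \norm{\grad(f\circ\exp_{p_i})(v_{i,k})}^2 \leq \tfrac{2\widehat{\alpha}_r}{N}(f(p_0) - f^\ast)$, which is below $\epsilon^2$ once $N \geq \tfrac{2\widehat{\alpha}_r}{\epsilon^2}(f(p_0)-f^\ast)$, giving the claimed iteration count.

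I expect the main obstacle to be the dimension-free second-order bound on $\exp_p$: getting $\norm{\Hess(\exp_p)_v}$ controlled requires differentiating the Jacobi equation and carefully tracking how $\conn R$ couples into the variation, and one must verify that the resulting estimate genuinely does not pick up a factor of $\dim\MM$ (it does not, because the comparison is field-by-field against a one-dimensional model, but making this rigorous is the delicate point). A secondary subtlety is checking that the diameter hypothesis really does keep all of $\mathcal{X}$ — and all intermediate iterates $x_t$, which is an explicit assumption — inside the injectivity domain $U_{p_i}$ for every $i$, so that \Cref{prop:vector_field} and the bound both apply at each step.
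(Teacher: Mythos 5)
Your proposal is correct and takes essentially the same route as the paper: the paper's proof simply cites Proposition $5.1$ of \cite{lezcano2020curvature} for the dimension-free bound on $\norm{\Hess\pa{f \circ \exp_p}}$ --- which is precisely the Rauch/Jacobi-field comparison you sketch, with $\conn R$ entering through the differentiated Jacobi equation --- and then invokes the classical descent-lemma telescoping argument for non-convex optimization from \cite{nesterov2004introductory} to get the $\ceil{2\widehat{\alpha}_r(f(p_0)-f^\ast)/\epsilon^2}$ iteration count. The only difference is that you unpack the content of the cited proposition rather than treating it as a black box.
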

\begin{proof}
    The result follows from~\cite[Proposition $5.1$]{lezcano2020curvature} and the classical theory of non-convex optimization~\cite{nesterov2004introductory}.
\end{proof}
The bound $\widehat{\alpha}_r$ is an explicit function of $r, \delta, \Delta, \Lambda$ and takes a very simple form for most manifolds used in optimization. For example, when $\MM$ is the special orthogonal group $\SO{n}$ embedded in $\RR^{n \times n}$, it takes the form $\widehat{\alpha}_r = (1+\frac{r}{3})\alpha$. In other words, this result recovers the $\mathcal{O}(\epsilon^{-2})$ convergence rate of the Euclidean case modulo a curvature term and a linear size term.

The hypothesis about the iterates stay in $\mathcal{X}$, although restrictive, is standard when dealing with manifolds~\parencite{bonnabel2013stochastic,zhang2016riemannian,sato2019riemannian,tripuraneni2018averaging,ahn2020nesterov}.

%
\paragraph{Faster exponential of matrices.}
The exponential of matrices is fundamental when dealing with the exponential map in optimization on manifolds.
On the other hand, implementing correctly the exponential of matrices is not an easy task~\cite{moler1978nineteen,moler2003nineteen}. The current standard implementation in Scipy is based on the paper~\cite{al2009new}. This approximation uses an \lu{} decomposition and the solution of a triangular system, which are sequential algorithms.

In order to have a fast implementation in \gpu{} of the exponential of matrices, we implement one that is based on matrix multiplications---a polynomial approximation. The first algorithm of this kind with numerical accuracy guarantees was presented in the recent work by~\cite{bader2019computing}. Their algorithm consists of the scale-squaring trick with a clever evaluation of Taylor approximants. This evaluation, requires $5$ matrix multiplications, making the exponential of matrices a very cheap function use in the context of deep learning. We helped PyTorch maintainers to add this algorithm to PyTorch, getting a $14\times$ wall-clock speed improvement over the Tensorflow implementation. It has been released in PyTorch 1.7.0.

%
%

\section{Experiments}\label{sec:experiments}
\subsection{Experiment set-up}
In this section, we compare the practical performance of the approach presented in this paper with the state-of-the-art algorithms for stochastic Riemannian optimization with orthogonal constraints for neural networks. In particular, we compare it with dynamic trivializations \cite[\dtriv ][]{lezcano2019trivializations}, exponential trivialization~\cite[\exprnn][]{lezcano2019cheap}, Cayley trivializations~\cite[\scornn][]{helfrich2018orthogonal} and Riemannian gradient descent~\cite[\rgd][]{wisdom2016full}. We also include an \lstm{} for reference.

\paragraph{Relations between these methods}
All these methods fall under the dynamic trivialization framework. In particular, there are some relations between them: \rgd{} is \dtriv{}$1$ using \sgd{} as the optimizer, \exprnn{} is \dtriv{}$\infty$ using the identity matrix as the basis for the trivialization, and \atriv$K$ is \dtriv$K$ but with the momentum and adaptive terms in the optimizer corrected. \atriv$\infty$ and \dtriv$\infty$ are the same algorithm. We summarize these relations in~\Cref{tab:algs}.

\begin{table}[!ht]
    \centering
    \caption{Algorithms to optimize with orthogonal constraints}
    \label{tab:algs}
    \begin{tabular}{l|cccc}
        \toprule
        Model & Optimizer & Retraction & Adapt. & Type\\
        \midrule
        \scornn & Any & $\cay_{\I}$ &  Yes & Static\\
        \exprnn & Any & $\exp_{\I}$ &  Yes & Static\\
        \dtriv$\infty$ & Any & $\exp_{B_0}$ & Yes & Static \\
        \midrule
        \rgd      & \sgd & $\cay_B$ & --- & Dyn.\\
        \dtriv$n$ & Any  & $\exp_B$ & No & Dyn.\\
        \atriv$n$ & Any  & $\exp_B$ & Yes & Dyn.\\
        \bottomrule
    \end{tabular}
\end{table}

In the Adapt.\ column in~\Cref{tab:algs}, we note which models treat adaptive methods correctly. We have written $\phi_B(A) = B\phi(B^{-1}A)$ where $\phi = \exp$ or $\phi = \cay$, $A \in T_B\SO{n}$, $B \in \SO{n}$.

The model on which we perform the experiments is a vanilla \rnn{} together with an orthogonal constraint on the recurrent kernel. The orthogonality of $B$ is achieved following the methods in~\Cref{tab:algs}. The non-linearity used is the \code{modrelu}, introduced for these models in~\cite{arjovsky2016unitary}.

We include an implementation of the general framework in the accompanying material. We describe in detail all the hyperparameters needed to replicate the experiments in~\Cref{sec:hyper}.

\paragraph{Orthogonal \rnns{} in practice}
    In practice, we observed that these models benefit from a lower mixing constant in the adaptive term than the default used in \adam. In particular, a mixing constant of $\beta_2 = 0.99$ or even $\beta_2 = 0.95$ improved the stability of the algorithms. This makes a lot of sense since, even though the recurrent layer is orthogonal and the non-linearity is piecewise linear, so that the recurrent gradients are bounded, the input layer might still make the gradients unstable at times. Previous papers reported a better performance of \rmsprop{} over \adam{}~\cite{arjovsky2016unitary,helfrich2018orthogonal}. This is explained by the fact that the mixing constant of \rmsprop{} is initialized by default to $0.99$, while that of \adam{} is initialized to $0.999$. We also observed that using these lower mixing constants, one can almost duplicate the learning rate on the orthogonal parameters.

    We do not make use of these observations to improve our results in the experiments section, as we are not trying to get the best result, but merely to compare the different algorithms to optimize with orthogonal constraints.

We test these algorithms on the two standard problems that the community has been using to test stochastic optimization with orthogonal constraints in the context of neural networks, namely sequential \mnist{} and \timit~\cite{arjovsky2016unitary,henaff2016recurrent,helfrich2018orthogonal,lezcano2019cheap,lezcano2019trivializations}. 

\begin{remark}[Copying and adding problems]
We do not include the adding problem or the copying problem~\cite{hochreiter1997long} which are often considered, as these problems are too simple for recurrent models~\cite{henaff2016recurrent}. All the considered optimization methods converge to a correct answer in the copying and adding problems.
\end{remark}

\subsection{Sequential \mnist}
This task consists on classifying the numbers from $0$ to $9$ present in the \mnist{} dataset~\cite{lecun1998mnist}. This dataset is formed by $60.000$ $28 \times 28$ grayscale images. The sequential task \mnist{} processes each of the $784$ pixels of the image sequentially. We also consider the permuted sequential task \pmnist, were a permutation of the bits is sampled at the beginning of the training and fixed once and for all. This permutation is then used to permute the pixels of all the images before flattening them.

\begin{table}[t]
\centering
\caption{Best test accuracy at \mnist{} and \pmnist{}.}
\label{tab:mnist}
\scshape
\small
\begin{tabular}{l c c c}
    \toprule
    Model & n & \mnist & \pmnist \\
    \midrule
    \midrule
    \atriv{}$1$ & $170$ & $98.2$ & $95.1$ \\
    \dtriv{}$\infty$ & $170$ & $98.1$ & $95.0$ \\
    \dtriv{}$1$ & $170$ & $\mathbf{98.3}$ & $\mathbf{95.2}$ \\
    \exprnn & $170$ & $98.0$ & $94.9$ \\
    \scornn & $170$ & $97.2$ & $94.8$ \\
    \scurnn & $116$ & $97.6$ & $94.9$ \\
    \lstm & $128$ & $81.9$ & $79.5$ \\
    \rgd & $116$ & $94.7$ & $92.5$ \\
    \midrule
    \atriv{}$1$ & $360$ &  $\mathbf{98.9}$ & $96.4$ \\
    \dtriv{}$\infty$ & $360$ &  $\mathbf{98.9}$ & $\mathbf{96.5}$ \\
    \dtriv{}$1$ & $360$ &  $98.4$ & $96.3$ \\
    \exprnn & $360$ & $98.4$ & $96.2$ \\
    \scornn & $360$ & $98.1$ & $95.9$ \\
    \scurnn & $250$ & $98.3$ & $96.2$ \\
    \lstm & $256$ & $88.8$ & $88.8$ \\
    \rgd & $256$ & $96.1$ & $93.9$ \\
    \midrule
    \atriv{}$1$ & $512$ & $\mathbf{99.0}$ & $96.7$ \\
    \dtriv{}$\infty$ & $512$ & $\mathbf{99.0}$ & $\mathbf{96.8}$ \\
    \dtriv{}$1$ & $512$ & $98.7$ & $96.7$ \\
    \exprnn & $512$ & $98.7$ & $96.6$ \\
    \scornn & $512$ & $98.2$ & $96.5$ \\
    \lstm & $512$ & $91.9$ & $91.8$ \\
    \rgd & $512$ & $97.3$ & $94.7$ \\
    \bottomrule
\end{tabular}
\end{table}

We present the results of this experiments in~\Cref{tab:mnist}, where we have denoted by \textsc{N} the size of the hidden layer. This is chosen so that all the models in each group have the same number of parameters.

Before analyzing the results, we recall that \atriv$1$ is exactly the same algorithm as \dtriv$1$ but with a correct treatment of the adaptive term as per~\Cref{alg:adaptive_triv}.
\dtriv$\infty$ and \atriv$1$ both handle correctly the adaptive term, and they both are algorithms of the same family, since \dtriv$\infty$ is the same algorithm as \atriv$\infty$. In general, we observed a consistent performance across the algorithms of the family \atriv$K$ when varying $K$.

In this table we see two phenomena. First, we see that the corrected treatment of the adaptive term in \atriv$1$ gives a consistent improvement over \dtriv$1$ of this algorithm making it match the performance of its static counterpart \dtriv$\infty$. Furthermore, we see that all the dynamic trivializations yield a comparable performance when treating the adaptive term correctly, that is, \dtriv$\infty$ and \atriv$1$ yield a similar perform. This hints that there is no real improvement when it comes to \dtriv$\infty$ or \atriv$1$. As such, it would be recommendable to use \dtriv$\infty$, since by not requiring to change the basis, its implementation is simpler and it is considerably faster.

\subsection{\timit{} speech dataset}
The \timit{} dataset is composed by real-world speech recording~\cite{garofolo1993darpa}. This is a regression task in which, given the $n$ first points of a sequence, we want to predict the point $n+1$. The sequences of audio are preprocessed by downsampling them to $8$kHz. Then, they are represented in a log-scale after applying a short-time Fourier transform. This process generates variable-length sequences (with length between $61$ and $490$) of $129$ complex numbers at every step. This experiment and set-up was first proposed in~\cite{wisdom2016full}.
We present the results of this experiment in~\Cref{tab:timit}.

\begin{table}[t]
\centering
\caption{Test \mse{} at the end of the epoch with the lowest validation \mse{} for the \timit{} task.}
\label{tab:timit}
\small
\scshape
\begin{tabular}{l c c c c}
    \toprule
    Model & n & Val. \mse{} & Test \mse{}\\
    \midrule
    \midrule
    \atriv{}$1$ & $224$ & $\mathbf{4.52}$ & $\mathbf{4.50}$ \\
    \dtriv{}$\infty$ & $224$ & $4.75$ & $4.71$ \\
    \dtriv{}$1$ & $224$ & $6.55$ & $6.54$ \\
    \exprnn & $224$ & $5.34$ & $5.30$ \\
    \scornn & $224$ & $9.26$ & $8.50$ \\
    \scurnn & $128$ & $9.42$ & $7.23$ \\
    \lstm & $84$ &  $15.42$ & $14.30$ \\
    \rgd & $128$ &  $15.07$ & $14.58$ \\
    \midrule
    \atriv{}$1$ & $322$ & $\mathbf{3.33}$ & $\mathbf{3.29}$ \\
    \dtriv{}$\infty$ & $322$ & $3.39$ & $3.76$ \\
    \dtriv{}$1$ & $322$ & $4.56$ & $4.55$ \\
    \exprnn & $322$ & $4.42$ & $4.38$ \\
    \scornn & $322$ & $8.48$ & $7.82$ \\
    \lstm & $120$ & $13.93$ & $12.95$ \\
    \rgd & $192$ & $15.10$ & $14.50$ \\
    \midrule
    \atriv{}$1$ & $425$ & $\mathbf{1.93}$ & $\mathbf{1.92}$ \\
    \dtriv{}$\infty$ & $425$ & $2.00$ & $1.97$ \\
    \dtriv{}$1$ & $425$ & $4.21$ & $4.17$ \\
    \exprnn & $425$ & $5.52$ & $5.48$ \\
    \scornn & $425$ & $7.97$ & $7.36$ \\
    \scurnn & $258$ & $4.40$ & $3.39$ \\
    \lstm & $158$ & $13.66$ & $12.62$ \\
    \rgd & $256$ & $14.96$ & $14.69$ \\
    \bottomrule
\end{tabular}
\end{table}

In this experiment we observe the same behavior of that present in the \mnist{} task.
We see a consistent improvement of (\atriv$1$) over the version of the algorithm that does not correct the adaptive term (\dtriv$1$).
Furthermore, we also see that the correction in the adaptive term (\atriv$1$) yields a comparable performance to that of using the point at the first tangent space to trivialize (\dtriv$\infty$).

\section{Conclusion and Future Work}\label{sec:conclusion}
In this work we present a generalization of adaptive and momentum-based methods to manifolds by making use of the fact that any manifold $\MM$ has a radially convex submanifold that covers almost all $\MM$ and that is diffeomorphic to a subset of $\RR^n$. As such, the subset that is not on the image of this diffeomorphism has measure zero, and restricting our optimization methods to this open subset serves as a tight approximation to the original problem.

We see that by using the generalization of momentum and adaptive methods given by a flat connection in the interior of the segment domain, we achieve comparable performance in practice to that of pulling back the optimization problem along the Riemannian exponential at a given point. In particular, correcting the momentum and adaptive term in this fashion bridges the performance gap created by na\"ively using adaptive methods on top of dynamic trivializations.

We expect that the map considered here can become a useful alternative to parallel-transporting the momentum or other vectors in other optimization methods on manifolds.

\section*{Acknowledgements}
The work of MLC was supported by the Oxford-James Martin Graduate Scholarship.

\printbibliography
\clearpage
\onecolumn
\appendix
\section{Review of Vector Bundles and Connections}\label{sec:appendix_background}
In this section we summarize some standard results on theory of connections on vector bundles and get up to parallel transport systems and their relation with the usual definition of a covariant derivative. We choose to later use this framework rather than that of Riemannian metrics since all the problems that we encounter can already be seen in this more abstract framework, and one may see more geometrically the geometric obstructions to some constructions.

These results and definitions are all classic. A good reference for all of them is the book~\cite[][Chapters II, III]{kobayashi1963foundations}.

\paragraph{Vector bundles.}
A \emph{vector bundle $E$ of rank $k$ over $M$} consists of a pair of manifolds $E, M$ together with a submersion $\deffun{\pi : E -> M;}$ such that for every $x \in M$, the fibre $E_x \defi \pi^{-1}(\set{x})$ has a vector space structure, and for every $x \in M$ there exists a neighborhood $U \subset M$ and a linear isomorphism $t_x$ called a \emph{local trivialization} such that $\deffun{(\pi, t_x) :\pi^{-1}(U) -> U \times \RR^k;}$ is an isomorphism. This isomorphism indicates how to locally glue the fibers together. Intuitively, $E$ is a collection of vector spaces which represents the idea of having glued a vector space of dimension $k$ at every point of $M$. $E$ has locally the structure of product of manifolds. The tangent bundle $TM$ of a manifold $M$ of dimension $n$ is an example of a vector space of rank $n$ over $M$. The cylinder $S^1 \times \RR$ and the M\"obius bundle (a M\"obius band of infinite width) are two rank $1$ vector bundles over the circle $S^1$. The cylinder is a trivial bundle, as it is globally diffeomorphic to a product of spaces, while the M\"obius bundle is a non-trivial bundle, as it is just locally diffeomorphic to $U \times \RR$, $U \subset \RR$, but not globally. A \emph{section} of a vector bundle is a smooth map $\deffun{s : M -> E;}$ such that $\pi \circ s = \Id_M$. Intuitively, a section is a smooth choice of vectors on each vector space. When $E = TM$, a section is just a vector field. We denote by $\Gamma(E)$ the space of all sections on $E$.

\paragraph{Abstract and linear connections.}
The tangent bundle $TE$ of a vector bundle has a distinguished sub-bundle $VE \defi \ker \dif \pi$. In the same fashion that for vector spaces there is no natural inclusion of a quotient vector space onto the original vector space (it requires a choice of basis), here there is no natural complement to $VE$ in $TE$. We say that a sub-bundle $\HH\subset TE$ is an \emph{Ehresmann connection on $E$} if $VE \oplus \HH = TE$, in other words, for every $p \in E$, $V_pE$ and $\HH_p$ have zero intersection and generate the whole $T_pE$. As a consequence, we have that $\dif \pi(\HH_p) = T_{\pi(p)}M$ is a linear isomorphism. We say that an Ehresmann connection is a connection if it is linear on the fibers, that is, given the scalar multiplication on the fibers $\mu_a(p) = ap$, then $\pa{\dif\mu_a}_p\pa{\HH_p} = \HH_{ap}$.

\paragraph{Parallel transport systems.}
An equivalent way to talk about connections is through parallel transport systems. A \emph{parallel transport system} $\PP$ is an assignment for every curve $\deffun{\gamma : [a,b] -> M;}$ and point $p \in E_{\gamma(a)}$ of a curve $\PP_\gamma(p)$ on $E$ with initial point $p$ such that $\pi \circ \PP_\gamma(p) = \gamma$ such that certain linearity and regularity conditions hold. These conditions are: $1)$ For every curve $\gamma$, $\PPb_\gamma(p) \defi \PP_\gamma(p)(b)$ gives a linear isomorphism between $E_{\gamma(a)}$ and $E_{\gamma(b)}$ with inverse $\PPb^{-1}_\gamma = \PPb_{\gamma^{-}}$, where $\gamma^-$ is the curve walked in the reverse direction. $2)$ It is independent of the parametrization of $\gamma$. $3)$ Depends smoothly on $\gamma$ and $p$, and the initial tangent to the lifted curve only depends on the initial tangent vector to the curve on $M$.

When $\conn$ is the Levi-Civita connection on the tangent bundle $\deffun{\pi : TM -> M;}$ induced by a metric, the associated parallel transport system comes from parallel transporting vectors along curves.
The trivial parallel transport on a trivial vector bundle $E \iso M \times \RR^k$ for a point $(x, v) \in E$ and a curve such that $\gamma(0) = x$, is given by $\PP_\gamma(x,v)(t) = (\gamma(t), v) \in M \times \RR^k$. This gives the parallel transport system on $\RR^n$ with the canonical metric, since $T\RR^n \iso \RR^n \times \RR^n$.
One can prove that defining a parallel transport system is equivalent to defining a connection on a vector bundle. This makes explicit the idea that a connection is a way of identifying nearby tangent spaces on a manifold.

\paragraph{Covariant derivatives.}
A \emph{covariant derivative operator on $E$} is an application $\conn$ that lets us derive sections $s \in \Gamma(E)$, on a given direction $X \in TM$. We ask for this operator $\conn_X s$ to be linear in $X$ and Leibnitz in $s$, meaning that $\conn_X (fs) = X(f)s +f\conn_X s$. An example of a covariant derivative on $TM$ is the usual Levi-Civita connection given by a metric on $M$. Every a parallel transport system (or equivalently, every connection on $E$) gives raise to a covariant derivative operator on $E$ by the formula $\conn_{\dot{\gamma}(0)} s(\gamma(0)) \defi \frac{\dif}{\dif t}\vert_{t=0} \PPb^{-1}_{\gamma}\pa{s\pa{\gamma(t)}}$. Note that this notion coincides with that of the Levi-Civita connection, where $\conn_v X$ at a point $x \in M$ may be computed as the derivative of the parallel transport of $X(\gamma(t))$ back to $x$ where $\gamma(t)$ is the geodesic emanating from $x$ with $\gamma'(0) = v$. Covariant derivatives are a convenient way to perform computations, but they tend to hide the geometric intuition behind the ideas in differential geometry.

\clearpage
\section{Topological obstructions to admitting a connection with trivial holonomy}\label{sec:problems_further}
In this section we give a survey of some topological obstructions to admitting a flat connection and a flat metric. This comes to justify the assertion that the holonomy problems cannot be solved just by choosing a metric, as these problems are intrinsic to the topology of the manifold. We work in the context of vector bundles, as these are much more suited for this kind of questions than simply working on the tangent bundle. We give a short summary of these concepts in~\Cref{sec:appendix_background}.

Let $\deffun{\pi : E -> M;}$ be a vector bundle over $M$ together with a connection $\conn$. Associated to this connection, we have a parallel transport system $\PP$. Given that $\PPb^{-1}_\gamma = \PPb_{\gamma^-}$, we define the \emph{holonomy group} at a point $p$ as
\[
    \Hol(p) \defi \set{\PPb_{\gamma} | \gamma(a) = \gamma(b) = 0}.
\]

One may also define the \emph{restricted holonomy group} as
\[
    \Hol_0(p) \defi \set{\PPb_{\gamma} | \gamma(a) = \gamma(b) = 0, \gamma \text{ is null-homotopic}}.
\]
That is, $\Hol_0(p)$ is the group composed just by the isomorphisms given by curves that are contractible to a point.

One can prove that $\Hol(p)$ is a Lie group and $\Hol_0(p)$ is the connected component containing the identity. More importantly, we have that if the group is trivial, that is, $\Hol(p) = \set{\Id}$, then the parallel transport is also trivial. A trivial parallel transport system gives a global linear isomorphism between the fibers of the vector bundle, meaning that the vector bundle is globally trivial, that is, $E = M \times \RR^k$. This imposes a topological restriction on the type of connections that a bundle admits. What this means for us is that most manifolds do not admit a connection with trivial holonomy on their tangent bundle. For example, it is easy to see that the sphere $S^2$ does not have a trivial bundle, by the hairy ball theorem, as it does not admit a constant non-zero vector field. In general, the tangent bundle of $S^n$ is non-trivial for $n \neq 1, 3, 7$. This is a very strong obstruction given by the differentiable structure of the manifold to finding connections with trivial holonomy.

Even if the manifold has a trivial tangent bundle, this may still not be enough for our purposes. For example, it is not difficult to show using left-invariant vector fields that the tangent bundle of a Lie group $G$ is trivial, that is $TG = G \times \glie$. As such, one may put a trivial connection on this bundle following this trivialization. This connection will of course have trivial holonomy. On the other hand, this connection has torsion, unless $G$ is Abelian. In particular, a direct computation shows that this connection has torsion $T(X,Y) = -[X,Y]$ for left-invariant vector fields. In other words, whenever $G$ is not Abelian this connection does not come from any metric. Remember that a Lie group (in particular a matrix Lie group) is Abelian if and only if it is the product of a Euclidean space and a torus. As such, for most interesting Lie groups like $\GL{n}, \SO{n}, \U{n}, \SU{n}, \SL{n},\ldots$ this trivialization does not induce a parallel transport system without holonomy that comes from a metric when working on dimensions larger than $2$.

More generally, if the manifold admits a metric of non-positive curvature, and it is simply connected, then it is diffeomorphic to $\RR^n$ through the Riemannian exponential map. These are called Hadamard manifolds.
On these manifolds, one may clearly put a flat metric---and hence with trivial holonomy given that it is simply connected---simply by pushing forward the flat metric from $\RR^n$ along the diffeomorphism. This was explored for the case of symmetric positive definite matrices in the context of medical imaging and numerical analysis in~\cite{arsigny2006log,arsigny2006geometric}.

We use a similar idea to this one in our approach to define momentum and adaptive methods on manifolds. In particular, we use that for every manifold, the exponential map is a diffeomorphism from a radially convex subset of $\RR^n$ onto almost all the manifold, therefore being able to make almost all the tangent bundle trivial, and being able to put a flat metric on almost all the manifold. That is the reason why this approach is called a trivialization. Note that if our manifold were the symmetric positive definite matrices, the trivialization map would be a translation by left-multiplication by a matrix of that considered in~\cite{arsigny2006geometric}, so trivializations can be seen as a generalization of this paper to arbitrary manifolds.

\clearpage
\section{Experiment Set-Up and Hyperparameters}\label{sec:hyper}
The experiments tried to replicate faithfully the results of the previous papers. For that reason, we used the implementation of~\cite[][Section F]{lezcano2019trivializations} as a base.

The initialization, size, optimizers, and mixing constants of the adaptive terms in the algorithms, were the same as those described in previous papers across all the experiments. The only hyperparameters that were tuned were the learning rates of the orthogonal and the non-orthogonal parameters for the comparison to be fair.



The learning rates used were as follows:

\begin{table}[!ht]
    \centering
    \caption{Hyperparameters for \atriv{}$1$.}
    \label{tab:atriv1}
    \begin{tabular}{ll|cccc}
        \toprule
        Dataset & Size & Optimizer & Learning Rate & Orth.\ Optimizer & Orth.\ Learning Rate \\
        \midrule
        \midrule
        \multirow{3}{*}{\mnist} &
        $170$ &
        \multirow{6}{*}{\rmsprop} &
        $7\cdot 10^{-4}$ &
        \multirow{6}{*}{\rmsprop} &
        $10^{-4}$ \\
        & $360$ & & $5 \cdot 10^{-4}$ & & $10^{-4}$ \\
        & $512$ & & $3 \cdot 10^{-4}$ & & $5 \cdot 10^{-5}$ \\
        \multirow{3}{*}{\pmnist} &
        $170$ &
        &
        $7\cdot 10^{-4}$ &
        &
        $10^{-4}$ \\
        & $360$ & & $7 \cdot 10^{-4}$ & & $7 \cdot 10^{-5}$\\
        & $512$ & & $5 \cdot 10^{-4}$ & & $7 \cdot 10^{-5}$ \\
        \midrule
        \multirow{3}{*}{\timit} &
        $224$ &
        \multirow{3}{*}{\adam} &
        $10^{-3}$ &
        \multirow{3}{*}{\rmsprop} &
        $2 \cdot 10^{-4}$ \\
        & $322$ & & $10^{-3}$ & & $2 \cdot 10^{-4}$ \\
        & $425$ & & $10^{-3}$ & & $2 \cdot 10^{-4}$ \\
        \bottomrule
    \end{tabular}
\end{table}

\end{document}